\newcommand{\tuple}[2]{\langle #1, \, #2 \rangle}
\newcommand{\Ch}{C_{H}}
\newcommand{\Chu}{C_{Human}}
\newcommand{\Cr}{C_{R}}
\newcommand{\Cro}{C_{Robot}}
\newtheorem{theorem}{Theorem}
\newcommand{\TODO}[1]{\ifthenelse{\boolean{include-notes}}
 {{\color{red} TODO: #1}}{}}
\newcommand{\Rohin}[1]{\ifthenelse{\boolean{include-notes}}
 {{\color{magenta} RS: #1}}{}}
\newcommand{\adnote}[1]{\ifthenelse{\boolean{include-notes}}
 {{\color{blue}AD: #1}}{}}
\definecolor{goodblue}{rgb}{0.36, 0.54, 0.66}
\newcommand{\Rach}[1]{\ifthenelse{\boolean{include-notes}}
 {{\color{goodblue} RF: #1}}{}}
 \newcommand\blfootnote[1]{%
  \begingroup
  \renewcommand\thefootnote{}\footnote{#1}%
  \addtocounter{footnote}{-1}%
  \endgroup
}
\title{Choice Set Misspecification in Reward Inference}
\author{
    Rachel Freedman\thanks{\texttt{rachel.freedman@berkeley.edu}}\\
    University of California, Berkeley\\
\And
    Rohin Shah\\
    DeepMind
\And
    Anca Dragan\\
    University of California, Berkeley
}
\begin{document}

\maketitle

\begin{abstract}
  Specifying reward functions for robots that operate in environments without a natural reward signal can be challenging, and incorrectly specified rewards can incentivise degenerate or dangerous behavior. A promising alternative to manually specifying reward functions is to enable robots to infer them from human feedback, like demonstrations or corrections. To interpret this feedback, robots treat as approximately optimal a choice the person makes from a \emph{choice set}, like the set of possible trajectories they could have demonstrated or possible corrections they could have made. In this work, we introduce the idea that the choice set itself might be difficult to specify, and analyze \emph{choice set misspecification}: what happens as the robot makes incorrect assumptions about the set of choices from which the human selects their feedback. We propose a classification of different kinds of choice set misspecification, and show that these different classes lead to meaningful differences in the inferred reward and resulting performance. While we would normally expect misspecification to hurt, we find that certain kinds of misspecification are neither helpful nor harmful (in expectation). However, in other situations, misspecification can be extremely harmful, leading the robot to believe the \emph{opposite} of what it should believe. We hope our results will allow for better prediction and response to the effects of misspecification in real-world reward inference.
\end{abstract}

\section{Introduction}

    \blfootnote{Copyright © 2020 for this paper by its authors. Use permitted under Creative Commons License Attribution 4.0 International (CC BY 4.0).}Specifying reward functions for robots that operate in environments without a natural reward signal can be challenging, and incorrectly specified rewards can incentivise degenerate or dangerous behavior~\cite{Leike2018,Krakovna2018}. A promising alternative to manually specifying reward functions is to design techniques that allow robots to infer them from observing and interacting with humans.
    
    \begin{figure}
        \centering
        \includegraphics[width=.6\linewidth]{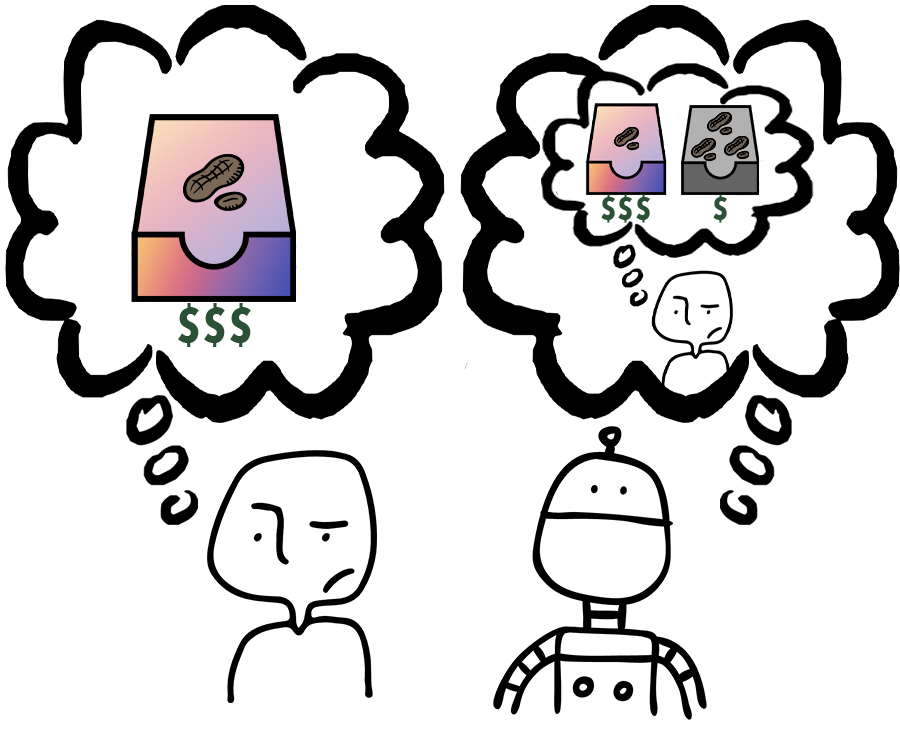} 
        \caption{Example \textit{choice set misspecification}: The human chooses a pack of peanuts at the supermarket. They only notice the expensive one because it has flashy packaging, so that's the one they buy. However, the robot incorrectly assumes that the human can see both the expensive flashy one and the cheap one with dull packaging but extra peanuts. As a result, the robot incorrectly infers that the human likes flashy packaging, paying more, and getting fewer peanuts.}
        \label{fig:front_fig}
    \end{figure}
    
    These techniques typically model humans as optimal or noisily optimal. Unfortunately, humans tend to deviate from optimality in systematically biased ways~\cite{Kahneman29179,Choi2014}. Recent work improves upon these models by modeling pedagogy~\cite{hadfield2016cooperative}, strategic behavior~\cite{waugh2013computational}, risk aversion~\cite{majumdar2017}, hyperbolic discounting~\cite{Evans2015}, or indifference between similar options~\cite{bobu2020less}. However, given the complexity of human behavior, our human models will likely always be at least somewhat misspecified~\cite{Steinhardt2017}.
    
    One way to formally characterize misspecification is as a misalignment between the real human and the robot's assumptions about the human. Recent work in this vein has examined incorrect assumptions about the human's hypothesis space of rewards~\cite{bobu2020quantifying}, their dynamics model of the world~\cite{reddy2018you}, and their level of pedagogic behavior~\cite{milli2019literal}. In this work, we identify another potential source of misalignment: \emph{what if the robot is wrong about what feedback the human could have given?} Consider the situation illustrated in Figure~\ref{fig:front_fig}, in which the robot observes the human going grocery shopping. While the grocery store contains two packages of peanuts, the human only notices the more expensive version with flashy packaging, and so buys that one. If the robot doesn't realize that the human was effectively unable to evaluate the cheaper package on its merits, it will learn that the human values flashy packaging.
    
    We formalize this in the recent framework of reward-rational implicit choice (RRiC)~\cite{jeon2020reward} as misspecification in the human \emph{choice set}, which specifies what feedback the human could have given. Our core contribution is to categorize choice set misspecification into several formally and empirically distinguishable ``classes'', and find that different types have significantly different effects on performance. As we might expect, misspecification is usually harmful; in the most extreme case the choice set is so misspecified that the robot believes the human feedback was the \emph{worst} possible feedback for the true reward, and so updates strongly towards the \emph{opposite} of the true reward. Surprisingly, we find that under other circumstances misspecification is \emph{provably neutral}: it neither helps nor hurts performance in expectation. 
    % These results suggest that not all misspecification is equal; if we must err, perhaps there is performance to be gained by choosing which direction to err in.
    Crucially, these results suggest that not all misspecification is equivalently harmful to reward inference: we may be able to minimize negative impact by systematically erring toward particular misspecification classes defined in this work. Future work will explore this possibility.
    
    % Previous version:
    % However, given the complexity of human behavior, it seems unlikely that we will successfully account for all possible deviations in this manner. \citeauthor{shah2019feasibility} instead attempt to learn the human's planning algorithm (including any deviations) directly from data, but have only moderate success. As a result, it seems likely that our human models will always be at least somewhat misspecified~\cite{Steinhardt2017}.
    
    % It is therefore vital that we develop techniques to handle such misspecification. While prior work has investigated misspecification of the level of pedagogic behavior~\cite{milli2019literal} and the human's objective space~\cite{bobu2020quantifying}, we tackle misspecification of the assumed \emph{choice set}: that is, situations in which the robot has an incorrect belief about what feedback the human ``could have given'', as illustrated in Fig~\ref{fig:front_fig}. For example, in the context of learning from demonstrations, perhaps the human did not notice a particularly efficient route to the goal. If the robot reasons as though the human could have provided this route as a demonstration, then it will incorrectly reason that there must be some feature of the route that makes it less desirable.

\section{Reward Inference}\label{sec:inf}

    There are many ways that a human can provide feedback to a robot: demonstrations~\cite{ng2000algorithms,abbeel2004apprenticeship,ziebart2010modeling}, comparisons~\cite{sadigh2017active,Christiano2017}, natural language~\cite{goyal2019using}, corrections~\cite{bajcsy2017learning}, the state of the world~\cite{shah2019preferences}, proxy rewards~\cite{hadfield2017inverse,mindermann2018active}, etc. \citeauthor{jeon2020reward} propose a unifying formalism for reward inference to capture all of these possible feedback modalities, called reward-rational (implicit) choice (RRiC). Rather than study each feedback modality separately, we study misspecification in this general framework.
    
    RRiC consists of two main components: the human's \emph{choice set}, which corresponds to what the human \emph{could have done}, and the \emph{grounding function}, which converts choices into (distributions over) trajectories so that rewards can be computed.
    
    For example, in the case of learning from comparisons, the human chooses which out of two trajectories is better. Thus, the human's choice set is simply the set of trajectories they are comparing, and the grounding function is the identity. A more complex example is learning from the state of the world, in which the robot is deployed in an environment in which a human has already acted for $T$ timesteps, and must infer the human's preferences from the current world state. In this case, the robot can interpret the human as choosing between different possible states. Thus, the choice set is the set of possible states that the human could reach in $T$ timesteps, and the grounding function maps each such state to the set of trajectories that could have produced it.
    
    Let $\xi$ denote a trajectory and $\Xi$ denote the set of all possible trajectories. Given a choice set $C$ for the human and grounding function $\psi : C \rightarrow (\Xi \rightarrow [0, 1])$, \citeauthor{jeon2020reward} define a procedure for reward learning. They assume that the human is \emph{Boltzmann-rational} with rationality parameter $\beta$, so that the probability of choosing any particular feedback is given by:
    
    \begin{equation}
        \mathbb{P}(c \mid \theta, C) = \frac{exp(\beta \cdot \mathbb{E}_{\xi \sim \psi(c)}[r_\theta(\xi)])}{\sum_{c'\in C} exp(\beta \cdot \mathbb{E}_{\xi \sim \psi(c')}[r_\theta(\xi)])}
    \end{equation}
    
    From the robot's perspective, every piece of feedback $c$ is an observation about the true reward parameterization $\theta^*$, so the robot can use Bayesian inference to infer a posterior over $\theta$. Given a prior over reward parameters $\mathbb{P}(\theta)$, the RRiC inference procedure is defined as:
    
    \begin{equation}\label{eq:rrc}
        \mathbb{P}(\theta\mid c, C) \propto \frac{exp(\beta \cdot \mathbb{E}_{\xi \sim \psi(c)}[r_\theta(\xi)]}{\sum_{c'\in C} exp(\beta \cdot \mathbb{E}_{\xi \sim \psi(c')}[r_\theta(\xi)])} \cdot \mathbb{P}(\theta)
    \end{equation}
    
    Since we care about misspecification of the choice set $C$, we focus on learning from demonstrations, where we restrict the set of trajectories that the expert can demonstrate. This enables us to have a rich choice set, while allowing for a simple grounding function (the identity). In future work, we aim to test choice set misspecification with other feedback modalities as well.
    
\section{Choice Set Misspecification}\label{sec:class}

    For many common forms of feedback, including demonstrations and proxy rewards, the RRiC choice set is \textit{implicit}. The robot knows which element of feedback the human provided (ex. which demonstration they performed), but must assume which elements of feedback the human \textit{could have} provided based on their model of the human. However, this assumption could easily be incorrect -- the robot may assume that the human has capabilities that they do not, or may fail to account for cognitive biases that blind the human to particular feedback options, such as the human bias towards the most visually attention-grabbing choice in Fig~\ref{fig:front_fig}.
    
    To model such effects, we assume that the human selects feedback $c \in \Chu$ according to $\mathbb{P}(c\mid\theta, \Chu)$, while the robot updates their belief assuming a \emph{different} choice set $\Cro$ to get $\mathbb{P}(\theta\mid c, \Cro)$. Note that $\Cro$ is the robot's assumption about what the human's choice set is -- this is distinct from the robot's action space. When $\Chu \neq \Cro$, we get \textit{choice set misspecification}.
    
    It is easy to detect such misspecification when the human chooses feedback $c \notin \Cr$. In this case, the robot observes a choice that it believes to be impossible, which should certainly be grounds for reverting to some safe baseline policy. So, we only consider the case where the human's choice $c$ is also present in $\Cr$ (which also requires $\Ch$ and $\Cr$ to have at least one element in common).
    
    Within these constraints, we propose a classification of types of choice set misspecification in Table~\ref{tab:class}. On the vertical axis, misspecification is classified according to the location of the optimal element of feedback $c^* = {\mathrm{argmax}}_{c \in \Cr\cup\Ch}\;\mathbb{E}_{\xi \sim \psi(c)}[r_{\theta^*}(\xi))]$. If $c^*$ is \textit{available} to the human (in $\Ch$), then the class code begins with \texttt{A}. We only consider the case where $c^*$ is also in $\Cr$: the case where it is in $\Ch$ but not $\Cr$ is uninteresting, as the robot would observe the ``impossible'' event of the human choosing $c^*$, which immediately demonstrates misspecification at which point the robot should revert to some safe baseline policy. If $c^* \notin \Ch$, then we must have $c^* \in \Cr$ (since it was chosen from $\Ch \cup \Cr$), and the class code begins with \texttt{B}. On the horizontal axis, misspecification is classified according to the relationship between $\Cr$ and $\Ch$. $\Cr$ may be a subset (code \texttt{1}), superset (code \texttt{2}), or intersecting class (code \texttt{3}) of $\Ch$. For example, class \texttt{A1} describes the case in which the robot's choice set is a subset of the human's (perhaps because the human is more versatile), but both choice sets contain the optimal choice (perhaps because it is obvious).
    
    \begin{table}
        \centering
        \begin{tabular}{lllll}
                              &       & $\Cr\subset\Ch$ & $\Cr\supset\Ch$ & $\Cr\cap\Ch$ \\ \cline{3-5}
        \multirow{2}{*}{$c^*$} & \multicolumn{1}{l|}{$\in\Cr\cap\Ch$} & \texttt{A1}      & \texttt{A2}   & \texttt{A3}   \\
                              & \multicolumn{1}{l|}{$\in\Cr\backslash\Ch$}    &       & \texttt{B2}   & \texttt{B3}     \\        
        \end{tabular}
        \vspace{10pt}
        \caption{Choice set misspecification classification, where $\Cr$ is the robot's assumed choice set, $\Ch$ is the human's actual choice set, and $c^*$ is the optimal element from $\Cr\cup\Ch$. \texttt{B1} is omitted because if $\Cr \subset \Ch$, then $\Cr \backslash \Ch$ is empty and cannot contain $c^*$. }
        \label{tab:class}
        \title{Title}
    \end{table}

\section{Experiments}\label{sec:exp}

    To determine the effects of misspecification class, we artificially generated $\Cr$ and $\Ch$ with the properties of each particular class, simulated human feedback, ran RRiC reward inference, and then evaluated the robot's resulting belief distribution and optimal policy.

        \begin{figure}
            \centering
            \includegraphics[width=0.95\linewidth]{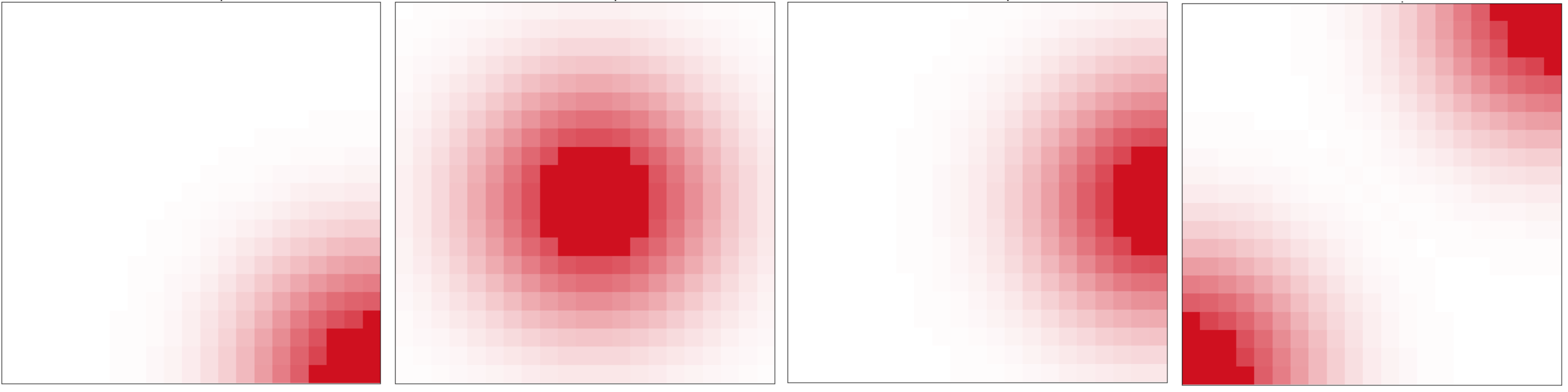}
            \vspace{5pt}
            \caption{The set of four gridworlds used in randomized experiments, with the lava feature marked in red.}
            \label{fig:envs}
        \end{figure}
    
    \subsection{Experimental Setup}
    
        \paragraph{Environment}
    
            To isolate the effects of misspecification and allow for computationally tractable Bayesian inference, we ran experiments in toy environments. We ran the randomized experiments in the four $20\times20$ gridworlds shown in Fig~\ref{fig:envs}. Each square in environment $x$ is a state $s^x = \{lava, goal\}$. $lava\in[0,1]$ is a continuous feature, while $goal\in\{0,1\}$ is a binary feature set to $1$ in the lower-right square of each grid and $0$ everywhere else. The true reward function $r_{\theta^*}$ is a linear combination of these features and a constant stay-alive cost incurred at each timestep, parameterized by $\theta = (w_{lava}, w_{goal}, w_{alive})$. Each episode begins with the robot in the upper-left corner and ends once the robot reaches the goal state or episode length reaches the horizon of 35 timesteps. Robot actions $A_R$ move the robot one square in a cardinal or diagonal direction, with actions that would move the robot off of the grid causing it to remain in place. The transition function $T$ is deterministic. Environment $x$ defines an MDP $M^x = \langle S^x, A_R, T, r_{\theta^*}\rangle$.
        
        \paragraph{Inference}
        
            While the RRiC framework enables inference from many different types of feedback, we use demonstration feedback here because demonstrations have an implicit choice set and straightforward deterministic grounding. Only the human knows their true reward function parameterization $\theta^*$. The robot begins with a uniform prior distribution over reward parameters $\mathbb{P}(\theta)$ in which $w_{lava}$ and $w_{alive}$ vary, but $w_{goal}$ always $=2.0$. $\mathbb{P}(\theta)$ contains $\theta^*$. RRiC inference proceeds as follows for each choice set tuple $\langle\Cr,\Ch\rangle$ and environment $x$. First, the simulated human selects the best demonstration from their choice set with respect to the true reward $c_H = {\mathrm{argmax}}_{c \in \Ch}\;\mathbb{E}_{\xi \sim \psi(c)}[r_{\theta^*}(\xi))]$. Then, the simulated robot uses Eq.~\ref{eq:rrc} to infer a ``correct'' distribution over reward parameterizations $B_H(\theta) \triangleq \mathbb{P}(\theta \mid c, \Ch)$ using the true human choice set, and a ``misspecified'' distribution $B_R(\theta) \triangleq \mathbb{P}(\theta \mid c, \Cr)$ using the misspecified human choice set. In order to evaluate the effects of each distribution on robot behavior, we define new MDPs $M^x_H = \langle S^x, A_R, T, r_{\mathbb{E}[B_H(\theta)]}\rangle$ and $M^x_R = \langle S^x, A_R, T, r_{\mathbb{E}[B_R(\theta)]}\rangle$ for each environment, solve them using value iteration, and then evaluate the rollouts of the resulting deterministic policies according to the true reward function $r_{\theta^*}$.
    
    \subsection{Randomized Choice Sets}
    
        We ran experiments with randomized choice set selection for each misspecification class to evaluate the effects of class on entropy change and regret.

        \paragraph{Conditions}
        
            The experimental conditions are the classes of choice set misspecification in Table~\ref{tab:class}: \texttt{A1}, \texttt{A2}, \texttt{A3}, \texttt{B2} and \texttt{B3}. We tested each misspecification class on each environment, then averaged across environments to evaluate each class. For each environment $x$, we first generated a master set $C_M^x$ of all demonstrations that are optimal w.r.t. at least one reward parameterization $\theta$. For each experimental class, we randomly generated 6 valid $\langle\Cr,\,\Ch\rangle$ tuples, with $\Cr,\Ch \subseteq C_M^x$. Duplicate tuples, or tuples in which $c_H \notin \Cr$, were not considered.
        
        \paragraph{Measures}
        
            There are two key experimental measures: \textit{entropy change} and \textit{regret}. Entropy change is the difference in entropy between the correct distribution $B_H$, and the misspecified distribution $B_R$. That is, $\Delta H = H(B_H) - H(B_R)$. If entropy change is positive, then misspecification induces overconfidence, and if it is negative, then misspecification induces underconfidence. 
            
            Regret is the difference in return between the optimal solution to $M^x_H$, with the correctly-inferred reward parameterization, and the optimal solution to $M^x_R$, with the incorrectly-inferred parameterization, averaged across all 4 environments. If $\xi^{*x}_H$ is an optimal trajectory in $M^x_H$ and $\xi^{*x}_R$ is an optimal trajectory in $M^x_R$, then $regret = \frac{1}{4}\sum^3_{x=0}[r_{\theta^*}(\xi^{*x}_H) - r_{\theta^*}(\xi^{*x}_R)]$. Note that we are measuring regret relative to the optimal action under the \emph{correctly specified belief}, rather than optimal action under the true reward. As a result, it is possible for regret to be \emph{negative}, e.g. if the misspecification makes the robot become more confident in the true reward than it would be under correct specification, and so execute a better policy.
            
    \subsection{Biased Choice Sets}\label{sec:bias_exp}
    
        We also ran an experiment in a fifth gridworld where we select the human choice set with a realistic human bias to illustrate how choice set misspecification may arise in practice. In this experiment the human only considers demonstrations that end at the goal state because, to humans, the word ``goal'' can be synonymous with ``end'' (Fig~\ref{fig:bias_expert_cset}). However, to the robot, the goal is merely one of multiple features in the environment. The robot has no reason to privilege it over the other features, so the robot considers every demonstration that is optimal w.r.t some possible reward parameterization (Fig~\ref{fig:bias_agent_cset}). The trajectory that only the robot considers is marked in blue. We ran RRiC inference using this $\tuple{\Cr}{\Ch}$ and evaluated the results using the same measures described above.
        
        % EXPERIMENT: B2-B(goal)_20200612171140 (driver=bias, seed=-1, map=3)
        \begin{figure}
            \centering
            \begin{subfigure}[b]{0.23\textwidth}
                \centering
                \includegraphics[width=0.95\textwidth]{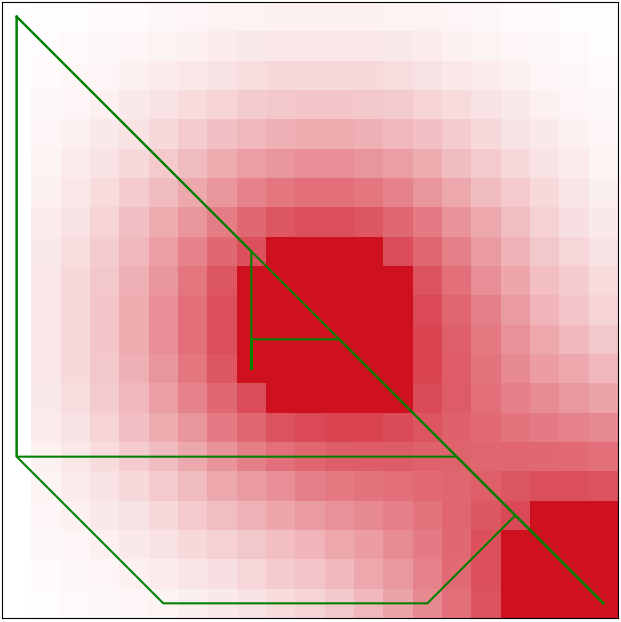} 
                \caption{$\Ch$}
                \label{fig:bias_expert_cset}
            \end{subfigure}
            \hspace{30pt}
            \begin{subfigure}[b]{0.23\textwidth}
                \centering
                \includegraphics[width=0.95\textwidth]{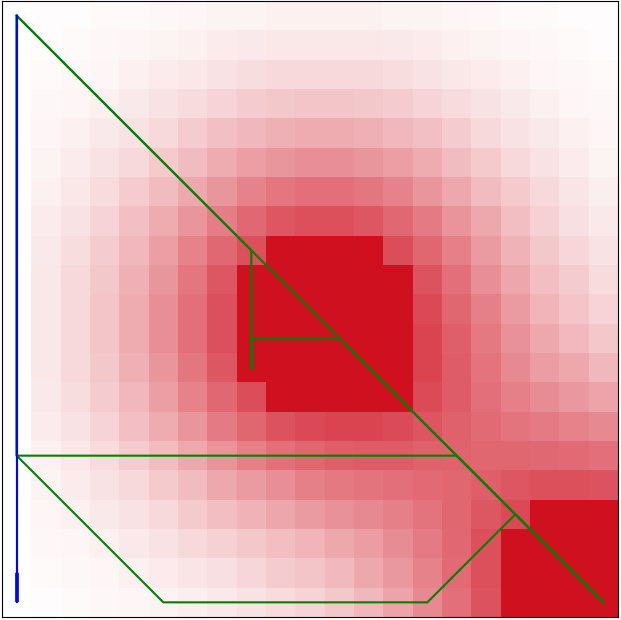} 
                \caption{$\Cr$}
                \label{fig:bias_agent_cset}
            \end{subfigure}
            \caption{Human and robot choice sets with a human goal bias. Because the human only considers trajectories that terminate at the goal, they don't consider the blue trajectory in $\Cr$.}
        \end{figure}
        
        % % Expert feedback, if we need it
        % \begin{figure}
        %     \centering
        %     \includegraphics[width=0.22\textwidth]{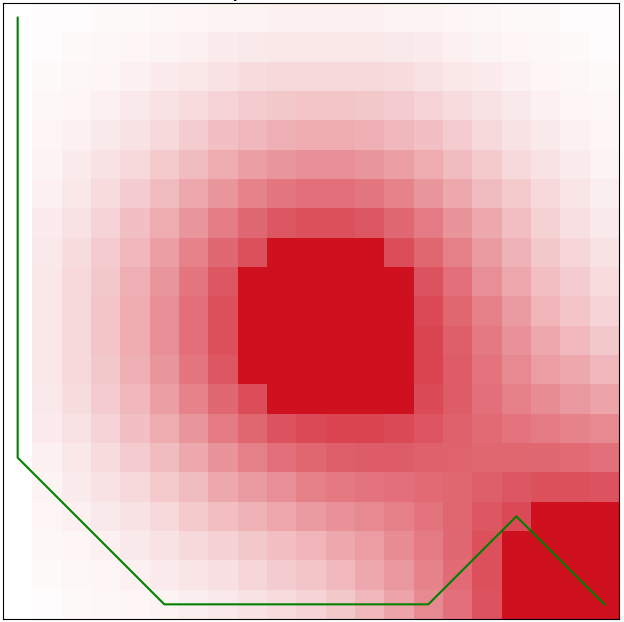}
        %     \caption{feedback $c_H$}
        %     \label{fig:bias_feedback}
        % \end{figure}   
    
\section{Results}\label{sec:results}

    We summarize the aggregated measures, discuss the realistic human bias result, then examine two interesting results: symmetry between classes \texttt{A1} and \texttt{A2} and high regret in class \texttt{B3}.

    \subsection{Aggregate Measures in Randomized Experiments}
    
        \begin{figure}
            \centering
            \includegraphics[width=.6\linewidth]{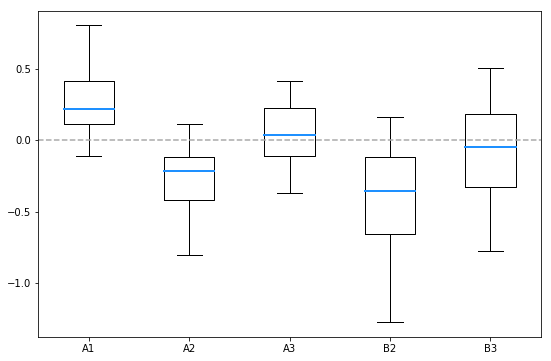}
            \caption{Entropy Change (N=24). The box is the IQR, the whiskers are the range, and the blue line is the median. There are no outliers.}
            \label{fig:ent}
        \end{figure}
        
        \begin{figure}
                \centering
                \includegraphics[width=.6\linewidth]{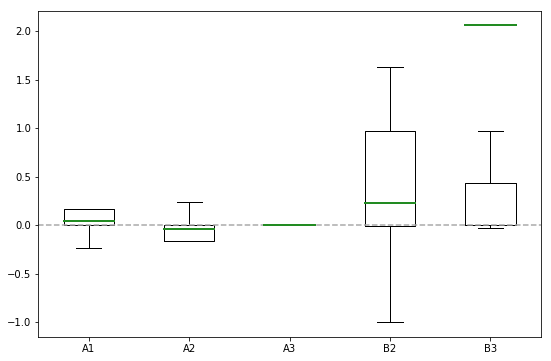}
                \caption{Regret (N=24). The box is the IQR, the whiskers are the most distant points within 1.5*the IQR, and the green line is the mean. Multiple outliers are omitted.}
                \label{fig:reg}
        \end{figure}
    
        \paragraph{Entropy Change}
        
            Entropy change varied significantly across misspecification class. As shown in Fig~\ref{fig:ent}, the interquartile ranges (IQRs) of classes \texttt{A1} and \texttt{A3} did not overlap with the IQRs of \texttt{A2} and \texttt{B2}. Moreover, \texttt{A1} and \texttt{A3} had positive medians, suggesting a tendency toward overconfidence, while \texttt{A2} and \texttt{B2} had negative medians, suggesting a tendency toward underconfidence. \texttt{B3} was less distinctive, with an IQR that overlapped with that of all other classes. Notably, the distributions over entropy change of classes \texttt{A1} and \texttt{A2} are precisely symmetric about 0.
            
        \paragraph{Regret}
            
            Regret also varied as a function of misspecification class. Each class had a median regret of 0, suggesting that misspecification commonly did not induce a large enough shift in belief for the robot to learn a different optimal policy. However the mean regret, plotted as green lines in Fig~\ref{fig:reg}, did vary markedly across classes. Regret was sometimes so high in class \texttt{B3} that outliers skewed the mean regret beyond of the whiskers of the boxplot. Again, classes \texttt{A1} and \texttt{A2} are precisely symmetric. We discuss this symmetry in Section~\ref{sec:sym}, then discuss the poor performance of \texttt{B3} in Section~\ref{sec:worst}.
   
    \subsection{Effects of Biased Choice Sets}\label{sec:bias_res}
    
        The human bias of only considering demonstrations that terminate at the goal leads to very poor inference in this environment. Because the human does not consider the blue demonstration from Fig~\ref{fig:bias_agent_cset}, which avoids the lava altogether, they are forced to provide the demonstration in Fig~\ref{fig:bias_feedback}, which terminates at the goal but is long and encounters lava. As a result, the robot infers the very incorrect belief distribution in Fig~\ref{fig:bias_misspec_belief}. Not only is this distribution underconfident (entropy change = $-0.614$), but it also induces poor performance (regret = $0.666$). This result shows that we can see an outsized negative impact on robot reward inference with a small incorrect assumption that the human considered and rejected demonstrations that don't terminate at the goal.
        
        % EXPERIMENT: B2-B(goal)_20200612171140 (driver=bias, seed=-1, map=3)
        \begin{figure}
            \centering
            \begin{subfigure}{0.23\textwidth}
                \centering
                \includegraphics[width=0.95\textwidth]{bias_feedback.png}
                \caption{feedback $c_H$}
                \label{fig:bias_feedback}
            \end{subfigure}
            \hspace{30pt}
            \begin{subfigure}{0.23\textwidth}
                \centering
                \includegraphics[width=0.95\textwidth]{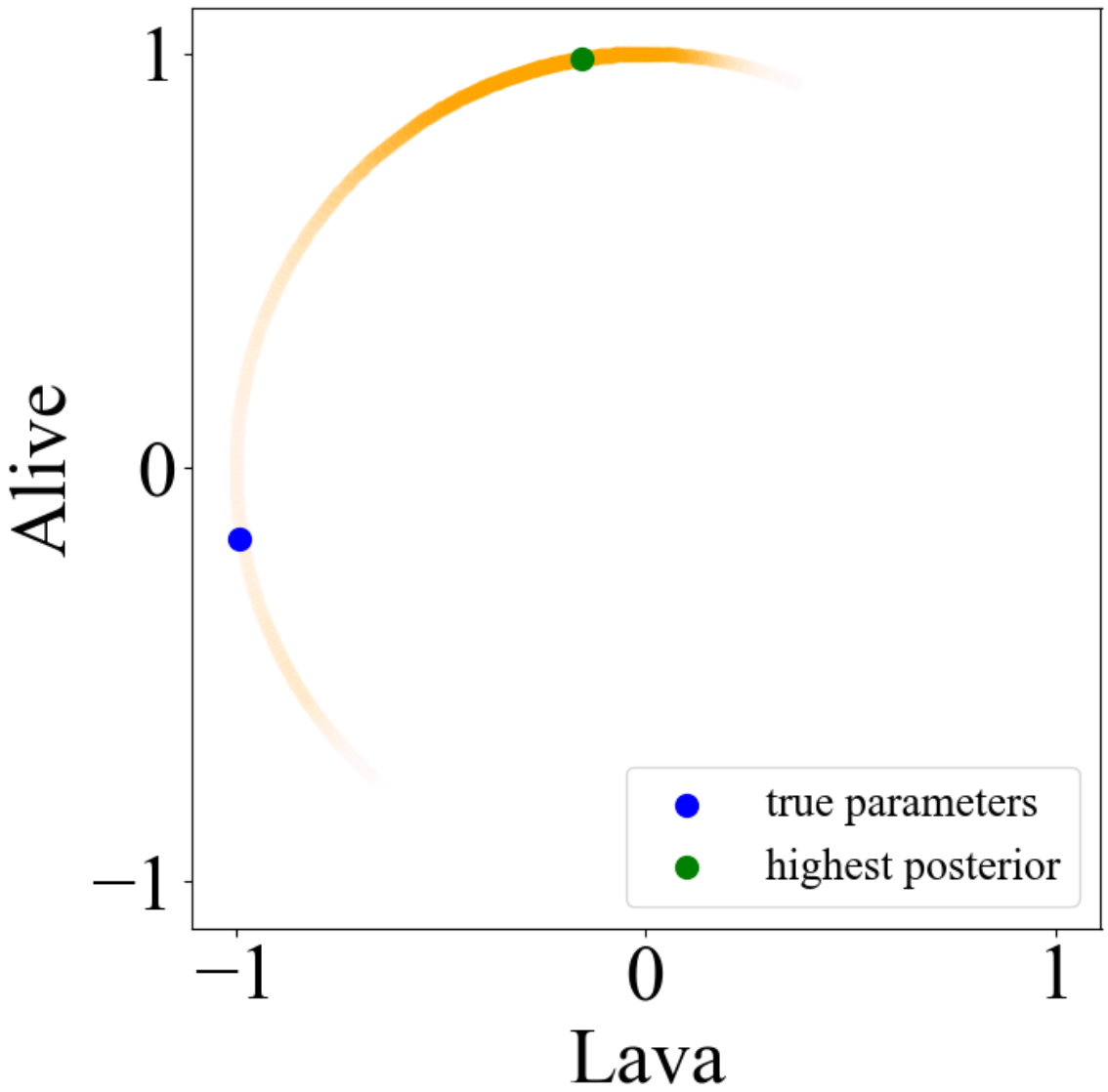}
                \caption{$\mathbb{P}(\theta\mid c_H,\Cr)$}
                \label{fig:bias_misspec_belief}
            \end{subfigure}
            \caption{Human feedback and the resulting misspecified robot belief with a human goal bias. Because the feedback that the biased human provides is poor, the robot learns a very incorrect distribution over rewards.}
        \end{figure} 
        
    \subsection{Symmetry}\label{sec:sym}
        Intuitively, misspecification should lead to worse performance in expectation. Surprisingly, when we combine misspecification classes \texttt{A1} and \texttt{A2}, their impact on entropy change and regret is actually neutral. The key to this is their symmetry -- if we switch the contents of $\Cro$ and $\Chu$ in an instance of class \texttt{A1} misspecification, we get an instance of class \texttt{A2} with exactly the opposite performance characteristics. Thus, if a pair in \texttt{A1} is harmful, then the analogous pair in \texttt{A2} must be helpful, meaning that it is \emph{better for performance than having the correct belief about the human's choice set}. We show below that this is always the case under certain symmetry conditions that apply to \texttt{A1} and \texttt{A2}.
        
        Assume that there is a master choice set $C_M$ containing all possible elements of feedback for MDP $M$, and that choice sets are sampled from a symmetric distribution over pairs of subsets $D : 2^{C_M} \times 2^{C_M} \rightarrow [0,1]$ with $D(C_x, C_y) = D(C_y, C_x)$ (where $2^{C_M}$ is the set of subsets of $C_M$). Let $ER(r_\theta, M)$ be the expected return from maximizing the reward function $r_\theta$ in $M$. A reward parameterization is chosen from a shared prior $\mathbb{P}(\theta)$ and $\Ch, \Cr$ are sampled from $D$. The human chooses the optimal element of feedback in their choice set $c_{\Ch} = {\mathrm{argmax}}_{c \in \Ch}\;\mathbb{E}_{\xi \sim \psi(c)}[r_{\theta^*}(\xi))]$.
    
        \begin{theorem}\label{theorem:sym}
            Let $M$ and $D$ be defined as above. Assume that $\forall C_x, C_y \sim D$, we have $c_{C_x} = c_{C_y}$; that is, the human would pick the same feedback regardless of which choice set she sees. If the robot follows RRiC inference according to Eq.~\ref{eq:rrc} and acts to maximize expected reward under the inferred belief, then:
            
            $$\mathop{\mathbb{E}}_{\Ch, \Cr \sim D}Regret(\Ch, \Cr) = 0$$
        \end{theorem}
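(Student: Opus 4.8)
The plan is to prove the statement by a symmetrization argument that pairs each sampled instance $(\Ch,\Cr) = (X,Y)$ with its ``swap'' $(\Ch,\Cr) = (Y,X)$ and shows that the two regrets are exact negatives of one another. Since the sampling distribution $D$ is symmetric, a pair and its swap occur with equal probability, so their contributions cancel in expectation. Because all choice sets are subsets of the finite master set $C_M$, the expectation over $D$ is a finite sum and the terms may be rearranged freely.

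First I would fix the reward parameter $\theta^*$ and isolate the only quantity that matters for the resulting policy. After observing feedback $c$ and assuming a choice set $C$, the robot forms the posterior $\mathbb{P}(\theta \mid c, C)$ via Eq.~\ref{eq:rrc}, takes its mean, and acts optimally for the posterior-mean reward. Define
$$g(C, c) = ER\!\left(r_{\mathbb{E}_{\theta \sim \mathbb{P}(\theta \mid c, C)}[\theta]}, M\right),$$
the true expected return of the policy that is optimal for the reward inferred from $(c, C)$. Since $B_H = \mathbb{P}(\theta\mid c,\Ch)$ and $B_R = \mathbb{P}(\theta\mid c,\Cr)$, the regret of an instance is exactly $Regret(\Ch,\Cr) = g(\Ch, c) - g(\Cr, c)$, where $c = c_{\Ch}$ is the feedback the human actually produces.

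The crucial step is the swap. For the instance $(X,Y)$ the observed feedback is $c_X$, so the regret is $g(X, c_X) - g(Y, c_X)$; for the swapped instance $(Y,X)$ the observed feedback is $c_Y$ and the regret is $g(Y, c_Y) - g(X, c_Y)$. Here the hypothesis $c_{C_x} = c_{C_y}$ does all the work: it forces $c_X = c_Y =: c$, so the \emph{same} feedback is observed in both instances (and, since $c = c_Y \in Y$ and $c = c_X \in X$, the likelihood $\mathbb{P}(\theta\mid c, \cdot)$ is well defined for the robot's assumed set in each case). Substituting, the two regrets become $g(X,c) - g(Y,c)$ and $g(Y,c) - g(X,c)$, which sum to zero. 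Expanding $\mathbb{E}_{\Ch,\Cr\sim D}Regret = \sum_{(X,Y)} D(X,Y)\,Regret(X,Y)$ and pairing each off-diagonal pair with its swap, the symmetry $D(X,Y) = D(Y,X)$ gives a contribution of $D(X,Y)\big[Regret(X,Y)+Regret(Y,X)\big] = 0$ per pair, while each diagonal term $X=Y$ contributes $g(X,c)-g(X,c)=0$. Thus the conditional expectation given $\theta^*$ vanishes, and since this holds for every $\theta^*$, averaging over $\theta^* \sim \mathbb{P}(\theta)$ preserves it, yielding the claim.

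The main obstacle, and the one place the hypothesis is essential, is establishing that the observed feedback is identical across an instance and its swap. Without $c_{C_x}=c_{C_y}$, swapping $\Ch$ and $\Cr$ would also change the human's choice from $c_X$ to $c_Y$, so the swapped regret $g(Y,c_Y)-g(X,c_Y)$ would no longer negate the original and the cancellation would fail; this is precisely why classes such as \texttt{B3}, in which the feedback shifts with the choice set, need not be neutral. A secondary technical point is ensuring $g(C,c)$, and hence $Regret$, is well defined when the optimal policy is non-unique. This is handled by fixing any deterministic tie-breaking rule, since the argument only requires $ER(\cdot, M)$ to be a fixed function of the inferred reward and is insensitive to the rule as long as it is applied consistently, so that $g(Y,c)$ denotes the same value whether $Y$ plays the role of the human's or the robot's assumed set.
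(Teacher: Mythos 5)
Your proposal is correct and follows essentially the same route as the paper's proof: your $g(C,c)$ is the paper's $R(C_x,c)$, the hypothesis $c_{C_x}=c_{C_y}$ is used in the same way to establish anti-symmetry of regret under swapping $\Ch$ and $\Cr$, and the symmetry of $D$ then cancels paired terms in expectation. The additional remarks on tie-breaking and conditioning on $\theta^*$ are harmless elaborations of the same argument.
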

        
        \begin{proof}
            Define $R(C_x, c)$ to be the return achieved when the robot follows RRiC inference with choice set $C_x$ and feedback $c$, then acts to maximize $r_{\mathbb{E}[B_x(\theta)]}$, keeping $\beta$ fixed. Since the human's choice is symmetric across $D$, for any $C_x, C_y \sim D$, regret is \emph{anti-}symmetric:
            \begin{align*}
                Regret(C_x, C_y)
                &= R(C_x, c_{C_x}) - R(C_y, c_{C_x}) \\
                &= R(C_x, c_{C_y}) - R(C_y, c_{C_y}) \\
                &= - Regret(C_y, C_x)
            \end{align*}
            Since $D$ is symmetric, $\tuple{C_x}{C_y}$ is as likely as $\tuple{C_y}{C_x}$. Combined with the anti-symmetry of regret, this implies that the expected regret must be zero:
            \begin{align*}
                &\ \ \mathop{\mathbb{E}}_{C_x, C_y \sim D}[Regret(C_x, C_y)] \\
                &= \frac{1}{2}\mathop{\mathbb{E}}_{C_x, C_y}[Regret(C_x, C_y)] + \frac{1}{2}\mathop{\mathbb{E}}_{C_x, C_y}[Regret(C_y, C_x)]\\
                &= \frac{1}{2}\mathop{\mathbb{E}}_{C_x, C_y}[Regret(C_x, C_y)] - \frac{1}{2}\mathop{\mathbb{E}}_{C_x, C_y}[Regret(C_x, C_y)]\\
                &= 0 
            \end{align*}
        \end{proof}
        
        An analogous proof would work for any anti-symmetric measure (including entropy change).
        
        \begin{table}
            \begin{center}
                \begin{tabular}[h]{ ccccc } \\\toprule
                    Class & Mean    & Std    & Q1      & Q3      \\ \midrule\midrule
                    A1    & 0.256  & 0.2265 & 0.1153  & 0.4153  \\ \midrule
                    A2    & -0.256 & 0.2265 & -0.4153 & -0.1153 \\ \bottomrule
                \end{tabular}
                \vspace{10pt}
                \caption{Entropy change is symmetric across classes \texttt{A1} and \texttt{A2}.}
                \label{tab:ent}
            \end{center}
        \end{table}
        \begin{table}
            \begin{center}
                \begin{tabular}[h]{ ccccc } \\\toprule
                    Class & Mean    & Std    & Q1      & Q3      \\ \midrule\midrule
                    A1    & 0.04  & 0.4906 & 0.1664  & 0.0  \\ \midrule
                    A2    & -0.04 & 0.4906 & 0.0 & -0.1664 \\ \bottomrule
                \end{tabular}
                \vspace{10pt}
                \caption{Regret is symmetric across classes \texttt{A1} and \texttt{A2}.}
                \label{tab:reg}
            \end{center}
        \end{table}
    
    \subsection{Worst Case}\label{sec:worst}
    
        \begin{table}
            \centering
            \begin{tabular}{ ccccc }     \\\toprule
                Class           & Mean              & Std               & Max                   & Min               \\ \midrule\midrule
                A3              & -0.001            & 0.5964            & 1.1689                & -1.1058           \\ \midrule
                B2              & 0.228             & 0.6395            & 1.6358                & -0.9973           \\ \midrule
                \textbf{B3}    & \textbf{2.059}     & \textbf{6.3767}   & \textbf{24.7252}      & \textbf{-0.9973}  \\ \bottomrule
            \end{tabular}
            \vspace{10pt}
            \caption{Regret comparison showing that class \texttt{B3} has much higher regret than neighboring classes.}
            \label{tab:regB3}
        \end{table} 
        
        As shown in Table~\ref{tab:regB3}, class \texttt{B3} misspecification can induce regret an order of magnitude worse than the maximum regret induced by classes \texttt{A3} and \texttt{B2}, which each differ from \texttt{B3} along a single axis. This is because the worst case inference occurs in RRiC when the human feedback $c_H$ is the worst element of $\Cr$, and this is only possible in class \texttt{B3}. In class \texttt{B2}, $C_R$ contains all of $C_H$, so as long as $|C_H|>1$, $C_R$ must contain at least one element worse than $C_H$. In class \texttt{A3}, $c_H = c^*$, so $C_R$ cannot contain any elements better than $c_H$. However, in class \texttt{B3}, $\Cr$ need not contain any elements worse than $c_H$, in which case the robot updates its belief in the opposite direction from the ground truth.
        
        For example, consider the sample human choice set in Fig~\ref{fig:expert_cset}. Both trajectories are particularly poor, but the human chooses the demonstration $c_H$ in Fig~\ref{fig:feedback} because it encounters slightly less lava and so has a marginally higher reward. Fig~\ref{fig:agent_cset_B2} shows a potential corresponding robot choice set $C_{R2}$ from \texttt{B2}, containing both trajectories from the human choice set as well as a few others. Fig~\ref{fig:belief_B2_large} shows $\mathbb{P}(\theta\mid c_H, C_{R2})$. The axes represent the weights on the $lava$ and $alive$ features and the space of possible parameterizations lies on the circle where $w_{lava} + w_{alive} = 1$. The opacity of the gold line is proportional to the weight that $\mathbb{P}(\theta)$ places on each parameter combination. The true reward has $w_{lava},w_{alive} < 0$, whereas the peak of this distribution has $w_{lava} < 0$, but $w_{alive} > 0$. This is because $C_{R2}$ contains shorter trajectories that encounter the same amount of lava, and so the robot infers that $c_H$ must be preferred in large part due to its length.
        
        Fig~\ref{fig:agent_cset_B3} shows an example robot choice set $C_{R3}$ from \texttt{B3}, and Fig~\ref{fig:belief_B3} shows the inferred $\mathbb{P}(\theta\mid c_H, C_{R3})$. Note that the peak of this distribution has $w_{lava},w_{alive}>0$. Since $c_H$ is the longest and the highest-lava trajectory in $C_{R3}$, \textit{and} alternative shorter and lower-lava trajectories exist in $C_{R3}$, the robot infers that the human is attempting to maximize both trajectory length and lava encountered: the \emph{opposite} of the truth. Unsurprisingly, maximizing expected reward for this belief leads to high regret. The key difference between \texttt{B2} and \texttt{B3} is that $c_H$ is the lowest-reward element in $C_{R3}$, resulting in the robot updating directly away from the true reward.
        
        % EXPERIMENT: B3-R_20200612171749 (driver=B3, seed=34, map=0)
        \begin{figure}
            \centering
            \begin{subfigure}{0.23\textwidth}
                \centering
                \includegraphics[width=0.95\textwidth]{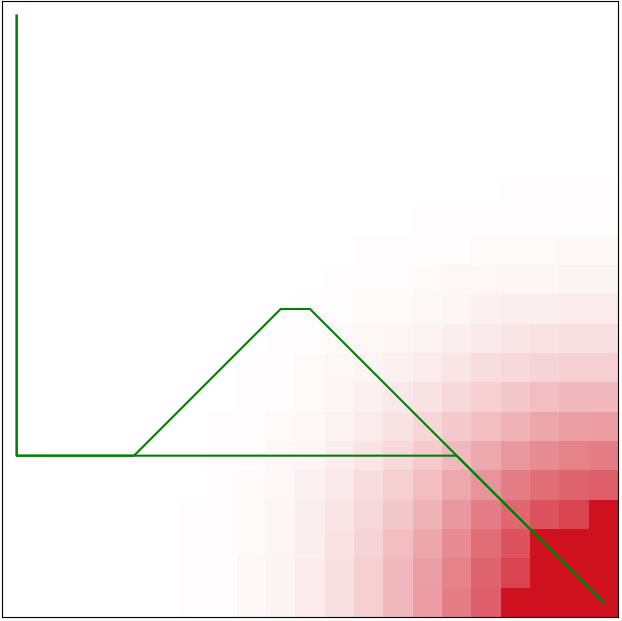} 
                \caption{$\Ch$}
                \label{fig:expert_cset}
            \end{subfigure}
            \hspace{30pt}
            \begin{subfigure}{0.23\textwidth}
                \centering
                \includegraphics[width=0.95\textwidth]{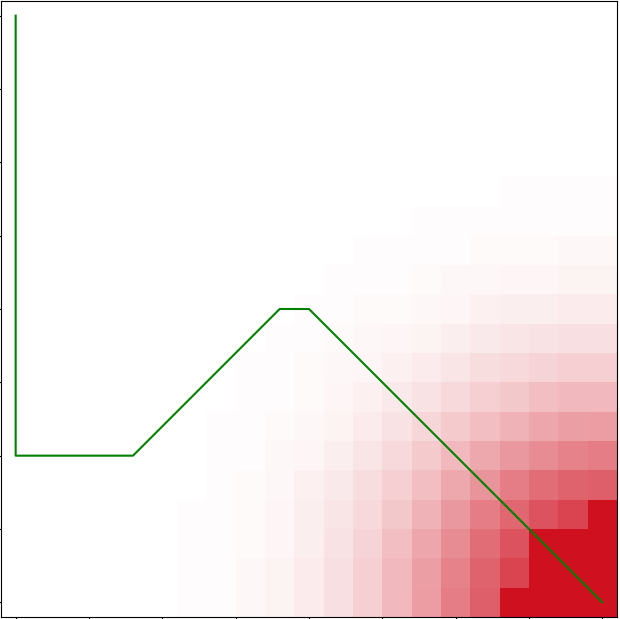} 
                \caption{$c_H$}
                \label{fig:feedback}
            \end{subfigure}
            \caption{Example human choice set and corresponding feedback.}
        \end{figure}
        
        % EXPERIMENT: B2-R_20200428170825 (driver=B2, seed=1, map=0)
        \begin{figure}
            \centering
            \begin{subfigure}{0.23\textwidth}
                \centering
                \includegraphics[width=0.95\textwidth]{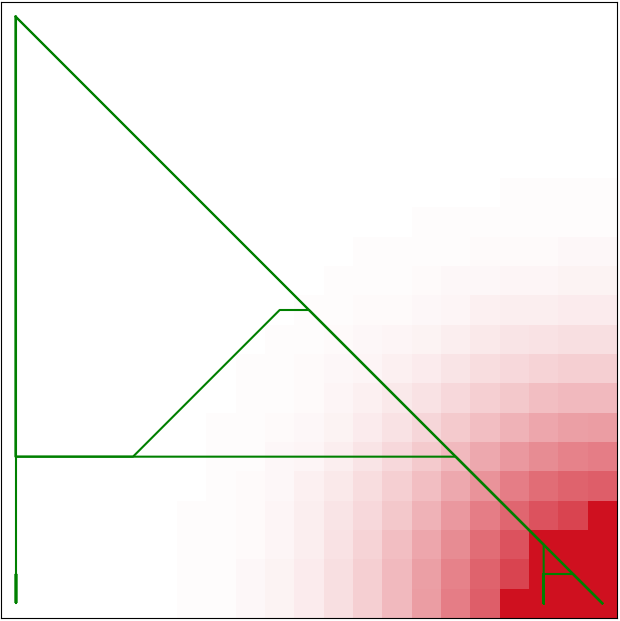}
                \caption{$C_{R2}$}
                \label{fig:agent_cset_B2}
            \end{subfigure}
            \hspace{30pt}
            \begin{subfigure}{0.23\textwidth}
                \centering
                \includegraphics[width=0.95\textwidth]{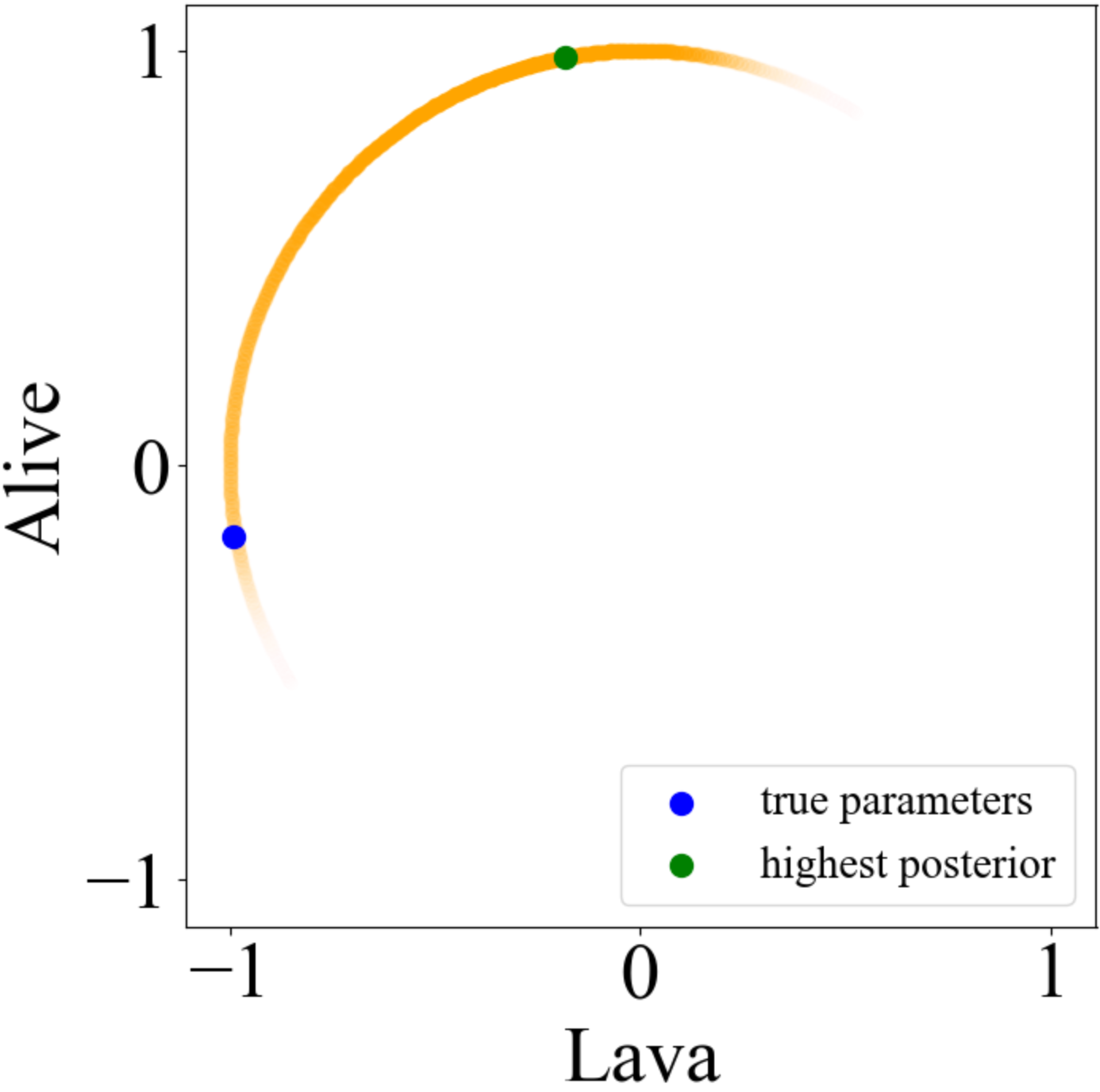}
                \caption{$\mathbb{P}(\theta\mid c_H,C_{R2})$}
                \label{fig:belief_B2_large}
            \end{subfigure}
            \caption{Robot choice set and resulting misspecified belief in \texttt{B2}.}
        \end{figure}   
        
        % EXPERIMENT: B3-R_20200612171749 (driver=B3, seed=34, map=0)
        \begin{figure}
            \centering
            \begin{subfigure}{0.23\textwidth}
                \centering
                \includegraphics[width=0.95\textwidth]{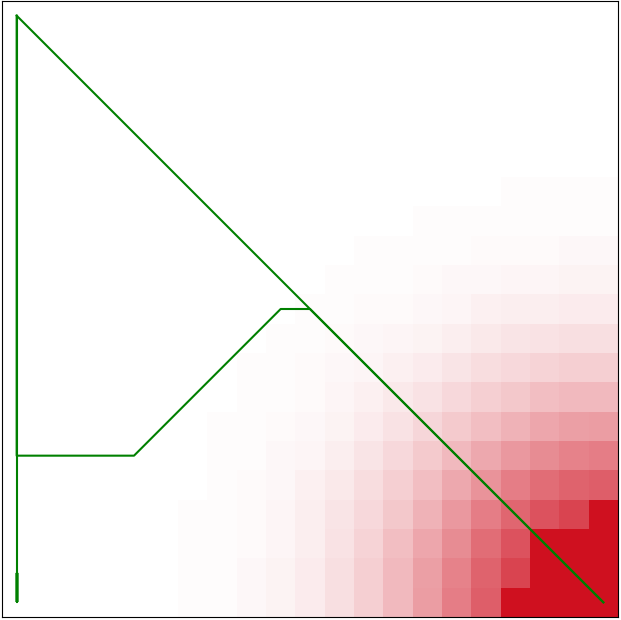}
                \caption{$C_{R3}$}
                \label{fig:agent_cset_B3}
            \end{subfigure}
            \hspace{30pt}
            \begin{subfigure}{0.23\textwidth}
                \centering
                \includegraphics[width=0.95\textwidth]{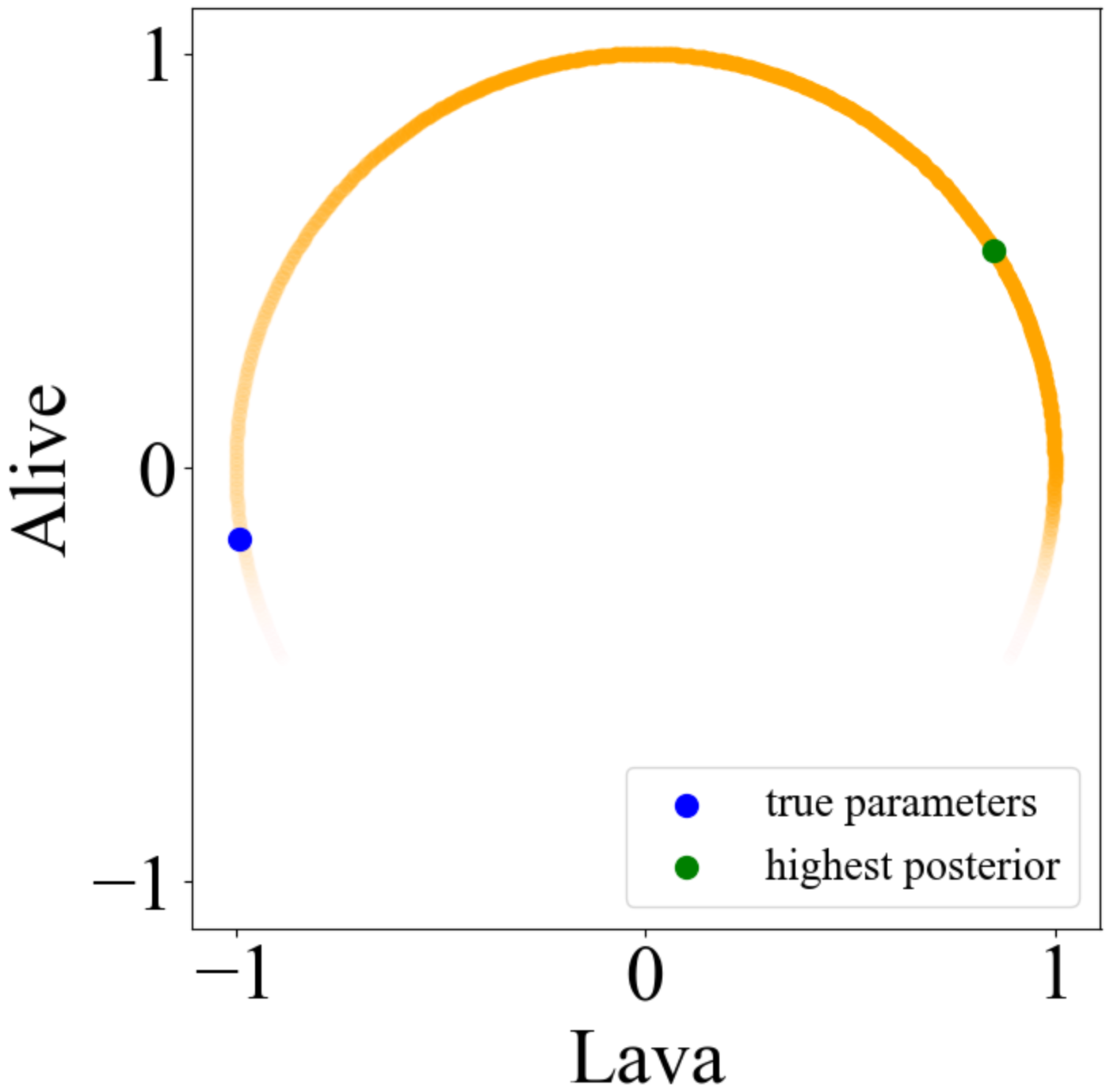}
                \caption{$\mathbb{P}(\theta\mid c_H,C_{R3})$}
                \label{fig:belief_B3}
            \end{subfigure}
            \caption{Robot choice set and resulting misspecified belief in \texttt{B3}.}
        \end{figure}  
        
\section{Discussion}\label{sec:dis}
    
    \paragraph{Summary}
    
        In this work, we highlighted the problem of choice set misspecification in generalized reward inference, where a human gives feedback selected from choice set $\Chu$ but the robot assumes that the human was choosing from choice set $\Cro$. As expected, such misspecification on average induces suboptimal behavior resulting in regret. However, a different story emerged once we distinguished between misspecification classes. We defined five distinct classes varying along two axes: the relationship between $\Chu$ and $\Cro$ and the location of the optimal element of feedback $c^*$. We empirically showed that different classes lead to different types of error, with some classes leading to overconfidence, some to underconfidence, and one to particularly high regret. Surprisingly, under certain conditions the expected regret under choice set misspecification is actually 0, meaning that in expectation, misspecification does not hurt in these situations.
        
    \paragraph{Implications}
    
        There is wide variance across the different types of choice-set misspecification: some may have particularly detrimental effects, and others may not be harmful at all. This suggests strategies for designing robot choice sets to minimize the impact of misspecification. For example, we find that regret tends to be \emph{negative} (that is, misspecification is \emph{helpful}) when the optimal element of feedback is in both $\Cro$ and $\Chu$ and $\Cro \supset \Chu$ (class \texttt{A2}). Similarly, worst-case inference occurs when the optimal element of feedback is in $\Cro$ only, and $\Chu$ contains elements that are not in $\Cro$ (class \texttt{B3}). This suggests that erring on the side of specifying a large $\Cro$, which makes \texttt{A2} more likely and \texttt{B3} less, may lead to more benign misspecification. Moreover, it may be possible to design protocols for the robot to identify unrealistic choice set-feedback combinations and verify its choice set with the human, reducing the likelihood of misspecification in the first place. We plan to investigate this in future work.
    
    \paragraph{Limitations and future work.}
    
        In this paper, we primarily sampled choice sets randomly from the master choice set of all possibly optimal demonstrations. However, this is not a realistic model. In future work, we plan to select human choice sets based on actual human biases to improve ecological validity. We also plan to test this classification and our resulting conclusions in more complex and realistic environments. Eventually, we plan to work on active learning protocols that allow the robot to identify when its choice set is misspecified and alter its beliefs accordingly.

\section*{Acknowledgements}

    We thank colleagues at the Center for Human-Compatible AI for discussion and feedback. This work was partially supported by an ONR YIP.
        
%% The file named.bst is a bibliography style file for BibTeX 0.99c
\bibliographystyle{named}
\bibliography{refs}
    
\end{document}